\newtheorem{theorem}{Theorem}
\newtheorem{lemma}{Lemma}
\newcommand{\reals}{\mathbb{R}}
\newcommand{\E}{\mathbb{E}}
\newcommand{\be}{\mathbf{e}}
\newcommand{\bx}{\mathbf{x}}
\newcommand{\bw}{\mathbf{w}}
\newcommand{\bsigma}{\boldsymbol{\sigma}}
\newcommand{\Ocal}{\mathcal{O}}
\newcommand{\Dcal}{\mathcal{D}}
\newcommand{\Wcal}{\mathcal{W}}
\newcommand{\norm}[1]{\|#1\|}
\newcommand{\inner}[1]{\langle#1\rangle}
\renewcommand{\eqref}[1]{Eq.~(\ref{#1})}
\newcommand{\lemref}[1]{Lemma~\ref{#1}}
\newcommand{\thmref}[1]{Thm.~\ref{#1}}
\title{The Sample Complexity of Learning Linear Predictors\\with the Squared Loss}
\author{Ohad Shamir\\Weizmann Institute of Science\\\texttt{ohad.shamir@weizmann.ac.il}}
\date{}
\begin{document}

\maketitle

\begin{abstract}
  In this short note, we provide a sample complexity lower bound for
  learning linear predictors with respect to the squared loss. Our focus is
  on an agnostic setting, where no assumptions are made on the data
  distribution. This contrasts with standard results in the literature,
  which either make distributional assumptions, refer to specific parameter
  settings, or use other performance measures.
\end{abstract}

\section{Introduction}

In machine learning and statistics, the squared loss is the most commonly
used loss for measuring real-valued predictions: Given a prediction $p$ and
actual target value $y$, it is defined as $\ell(p,y)=(p-y)^2$. It is
intuitive, has a convenient analytical form, and has been extremely
well-studied.

In this note, we concern ourselves with learning linear predictors with
respect to the squared loss, in a standard agnostic learning framework.
Formally, for some fixed parameters $X,Y,B$, we assume the existence of an
unknown distribution over $\{\bx\in \reals^d:\norm{\bx}\leq
1\}\times\{y\in\reals:|y|\leq Y\}$, from which we are given a training set
$S=\{\bx_i,y_i\}_{i=1}^{m}$ of $m$ i.i.d. examples, consisting of pairs of
instances $\bx$ and target values $y$. Given a linear predictor $\bx\mapsto
\inner{\bw,\bx}$, its risk with respect to the squared loss is defined as
\[
R(\bw)=\E_{(\bx,y)}[(\inner{\bw,\bx}-y)^2].
 \]
Our goal is to find a linear predictor $\bw$ from the hypothesis class of
norm-bounded linear predictors,
\[
\Wcal=\{\bw:\norm{\bw}\leq B\},
\]
such that its excess risk
\[
R(\bw)-\min_{\bw\in\Wcal}R(\bw)
\]
with respect to the best possible predictor in $\Wcal$ is as small as
possible. We focus here on the expected excess risk (over the randomness of
the training set and algorithm), and consider how it is affected by the
problem parameters $Y,B,d$ and the sample size $m$, uniformly over any
distribution.

Despite a huge literature on learning with the squared loss, we were unable
to locate an explicit and self-contained analysis for this question. The
existing results (some examples include
\citet{hsu2014random,koltchinskii2011oracle,lecue2014performance,tsybakov2003optimal,anthony1999neural,lee1998importance})
all appear to differ from our setting in one or more of the following
manners:
\begin{itemize}
  \item \emph{Distributional Assumptions:} In our agnostic setting, we
      assume nothing whatsoever about the data distribution, other than
      boundedness (as specified by $X,Y$). In contrast, most existing works
      rely on additional assumptions. Perhaps the most common assumption is
      a well-specified model, under which there exists a fixed
      $\bw\in\reals^d$ such that $y=\inner{\bw,\bx}+\xi$, where $\xi$ is a
      zero-mean noise term. Other works impose some moment or other
      conditions on the distribution of $\bx$, or consider a fixed design
      setting where the data instances are not sampled i.i.d.. These
      assumptions usually lead to excess risk bounds which scale (at least
      in finite dimensions) as $dY^2/m$, independent of the norm bound $B$.
      However, as we will see later, this is not the behavior in the
      distribution-free setting.
  \item \emph{Bounds not on the excess risk:} Many of the existing results
      are not on the excess risk, but rather on  $\E[\norm{\bw-\bw^*}^2]$
      or $\E[(\inner{\bw,\bx}-\inner{\bw^*,\bx})^2]$, where
      $\bw^*=\arg\min_{\bw\in\Wcal} R(\bw)$. The former measure is relevant
      for parameter estimation, while the latter measure can be shown to
      equal the excess risk when $\bw^*=\arg\min_{\bw\in\reals^d}R(\bw)$
      (i.e. $B=\infty$ - see \lemref{lem:proj} below). However, when we
      deal with the hypothesis class of norm-bounded predictors, then the
      excess risk can be larger by an arbitrary factor\footnote{For
      example, consider a distribution on $(x,y)$ such that $(x,y)=(1,1)$
      with probability $1$, and $\Wcal=\{w:w\in [-1/2,1/2]\}$. Then
      clearly, $w^*=1/2$, and $\E[(wx-w^* x)^2] = \E[(w-w^*)^2] =
      (1/2-w)^2$. However, the excess risk equals
$(w-1)^2-(1/2-1)^2 = w^2-2w+3/4= (1/2-w)^2+(1/2-w)$. This is larger than
  the excess risk by an additive factor of $(1/2-w)$, and a multiplicative
  factor of $\frac{1}{1/2-w}$ -- arbitrarily large if $w$ is close to
  $w^*=1/2$.}. Therefore, upper bounds on these measures do not imply upper
  bounds on the excess risk in our setting. We remark that in our
  distribution-free setting, we must constrain the hypothesis class, since
  if our hypothesis class contains all linear predictors ($B=\infty$), then
  the lower bounds below imply that non-trivial learning is impossible with
  any sample size (regardless of the dimension $d$).
  \item \emph{Bounded Functions:} Many learning theory results for the
      squared loss (such as thosed based on fat-shattering techniques)
      assume that the predictor functions and target values are bounded in
      some fixed interval (such as [-1,+1]). In our setting, this would
      correspond to assuming $B,Y\leq 1$. Other results assume Lipschitz
      loss functions, which is not satisfied for the squared loss. One
      notable exception is \citet{srebro2010smoothness}, which analyze
      smooth and strongly-convex losses (such as the squared loss) and
      provide tight bounds. However, their results apply either when the
      functions are bounded by $1$, or when $d$ is extremely large or
      infinite dimensional. In contrast, we provide more general results
      which hold for any $d$ and when the functions are not necessarily
      bounded by $1$.
  \item \emph{Collapsing Problem Parameters Together:} Many results
      implicitly take $Y$ to equal the largest possible prediction,
      $\sup_{\bw,\bx} |\inner{\bw,\bx}|=B$, and give results only in terms
      of $B$. However, we will see that $B$ and $Y$ affect the excess risk
      in a different manner, and it is thus important to discern between
      them. Moreover, $B$ and $Y$ can often have very different magnitudes.
      For example, in learning problems where the instances $\bx$ tend to
      be sparse, we may want to have the norm bound $B$ of the predictor to
      scale with the dimension $d$, while the bound on the target values
      $Y$ remain a fixed constant.
\end{itemize}

\section{Main Result}

Our main result is the following lower bound on the attainable excess risk, for algorithms returning a linear predictor based on an i.i.d. sample:
\begin{theorem}\label{thm:main}
  There exists a universal constant $c$, such that for any dimension $d$, sample size $m$, target value bound $Y$, predictor norm bound $B\geq 2Y$, and for any algorithm returning a linear predictor $\hat{\bw}$, there exists a valid data distribution such that
  \[
  \E[R(\hat{\bw})-R(\bw^*)]\geq   c~\min\left\{Y^2,\frac{B^2+dY^2}{m},\frac{BY}{\sqrt{m}}\right\},
  \]
  where $\bw^*=\arg\min_{\bw:\norm{\bw}\leq B}R(\bw)$.
\end{theorem}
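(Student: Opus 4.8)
The plan is to prove a minimax lower bound by exhibiting, for each regime of the parameters, a family of ``hard'' distributions on which every algorithm must incur the stated excess risk, and then, given $(Y,B,d,m)$, to select the construction whose bound matches the active term of the minimum. Since the claimed rate is a minimum of three terms, I expect each term to be the binding bound in a different regime, with the overall $\min$ emerging by tuning a single free scale parameter against three competing budgets: the \emph{range} budget ($|y|\le Y$), the \emph{norm} budget ($\norm{\bw}\le B$), and the statistical \emph{detection} budget ($m$ samples). Throughout I would reduce the task to hypothesis testing: place a prior over a parametric family indexed by a hidden sign vector $\bsigma\in\{\pm1\}^k$ (or a hidden continuous parameter), argue that with $m$ samples the posterior over $\bsigma$ stays near uniform, and use Assouad's lemma (or the related Le Cam / van Trees inequalities) to convert testing difficulty into an excess-risk lower bound. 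What makes the squared loss convenient is that, for an interior optimum, the excess risk equals $(\hat{\bw}-\bw^*)^\top\E[\bx\bx^\top](\hat{\bw}-\bw^*)$, so I only need to control how well $\bw^*$ can be recovered in the $\E[\bx\bx^\top]$-weighted norm.

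First I would handle the $Y^2$ and $\min\{dY^2/m,\,BY/\sqrt m\}$ terms together. Take $\bx$ supported on scaled basis vectors $\pm s\,\be_i$ with $i$ uniform in $\{1,\dots,k\}$ for some $k\le d$, and let $y\in\{\pm Y\}$ carry a small bias $\gamma$ whose sign on coordinate $i$ is the hidden bit $\sigma_i$. Then the per-coordinate optimum has magnitude proportional to $\gamma Y$, the residual noise is $\approx Y^2$, each bit needs $\gtrsim k/m$ samples to be detected (so the signs are unrecoverable once $\gamma\lesssim\sqrt{k/m}$), and the norm constraint reads $k(\gamma Y)^2\lesssim B^2$. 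An algorithm that cannot recover the signs is forced to excess risk of order $\gamma^2Y^2$, so the bound is $\gamma^2 Y^2$ maximized subject to $\gamma\le\min\{\sqrt{k/m},\,B/(Y\sqrt k),\,1\}$ and $k\le d$. Optimizing the pair $(k,\gamma)$ makes the three constraints bind in turn and yields exactly $\min\{Y^2,\ BY/\sqrt m,\ dY^2/m\}$; the slow rate $BY/\sqrt m$ appears precisely when the optimal number of active coordinates $k\asymp B\sqrt m/Y$ is still at most $d$.

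The remaining and, I expect, hardest piece is the $B^2/m$ contribution inside $(B^2+dY^2)/m$, which is the genuinely \emph{agnostic} phenomenon responsible for the departure from the well-specified rate. Its construction must exploit that, absent a well-specified model, $\bw^*$ can sit at magnitude $\asymp B$ while the residual $y-\inner{\bw^*,\bx}$ is as large as $\asymp B$ on rare high-leverage examples (where $\norm{\bx}\approx 1$ but $|y|\le Y$), so the effective noise driving recovery of $\bw^*$ scales with $B$, not $Y$. Concretely I would mix a frequent low-leverage ``signal'' component (small $\norm{\bx}$, target correlated through a hidden parameter) that pins $\norm{\bw^*}\asymp B$, with a rare high-leverage ``nuisance'' component that inflates the variance of recovering $\bw^*$, and then show via a sandwich/van Trees calculation that the $\E[\bx\bx^\top]$-weighted estimation error is $\Omega(B^2/m)$ for \emph{every} estimator. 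The delicate point, and the main obstacle, is ruling out a reweighted estimator: the construction must be arranged so that downweighting the high-leverage examples biases the recovered predictor by $\Omega(B)$ in $\E[\bx\bx^\top]$-norm, so no estimator can simultaneously escape the variance and the bias. All constructions must be checked to respect $\norm{\bx}\le1$, $|y|\le Y$, $\norm{\bw}\le B$, and $B\ge 2Y$ at once, and the final step is to verify that in each regime the valid construction already matches the corresponding term, so that the worst case reproduces $c\min\{Y^2,(B^2+dY^2)/m,BY/\sqrt m\}$ with a universal constant.
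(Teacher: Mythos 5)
Your proposal follows the same two-construction architecture as the paper, and your first construction is essentially the paper's Theorem~3 verbatim: $\bx$ uniform on $k\leq d$ scaled basis vectors, $y=\pm Y$ with a hidden-sign bias, an Assouad/Pinsker argument showing the signs are unrecoverable once the bias is $\lesssim\sqrt{k/m}$, a per-coordinate optimum capped at $B/\sqrt{k}$ by the norm budget, and the choice $k\asymp B\sqrt{m}/Y$ producing the $BY/\sqrt{m}$ term --- all exactly as in the paper. For the $B^2/m$ piece, your construction is also structurally the paper's (Theorem~2, done in $d=1$): a frequent low-leverage atom $x=Y/B$ with $y=Y$ pinning $w^*=B$, mixed with a rare high-leverage atom $x=1$ that shifts the optimum by $\Theta(B)$. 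Where you diverge is technique: you propose a van Trees/Fisher-information calculation over a continuous hidden parameter and flag ``ruling out a reweighted estimator'' as the main obstacle, whereas the paper uses a bare two-point Le Cam test --- $\Dcal_0$ versus $\Dcal_1$ differ only in whether the $x=1$ atom has probability $0$ or $\gamma\asymp 1/m$ (with the $x=Y/B$ atom at probability $\alpha\asymp B^2/(Y^2 m)$), so the KL divergence of $m$ samples is $O(m\gamma)=O(1)$ and indistinguishability follows from Pinsker plus a $\chi^2$ bound. This dissolves your reweighting worry automatically: the testing reduction quantifies over \emph{all} estimators at once, so no separate bias argument against downweighting schemes is needed, and your van Trees route, while plausible, is left far less pinned down than the rest of your sketch. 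Two small cautions: in this hard construction $w^*$ sits at the boundary of the ball ($w^*_0=B$), so the interior-optimum identity you invoke must be replaced by the one-sided inequality $R(\bw)-R(\bw^*)\geq \E[(\inner{\bw,\bx}-\inner{\bw^*,\bx})^2]$ (the paper's Lemma~1), which is all a lower bound needs; and the final assembly is not purely ``pick the active regime'' but a short case analysis on whether $m\leq B^2/Y^2$ (using $B\geq 2Y$) to merge the maximum of the two bounds into $c\min\{Y^2,(B^2+dY^2)/m,BY/\sqrt{m}\}$ --- routine, and consistent with what you describe.
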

The lower bound is a minimum of three terms. The first and last term can be matched up to constants using existing results:
\begin{itemize}
	\item Using the trivial zero predictor $\hat{\bw}=\mathbf{0}$, we are
	guaranteed that $\E[R(\hat{\bw})-R(\bw^*)]\leq \E[R(\hat{\bw})] =
	\E[(\inner{\mathbf{0},\bx}-y)^2] = \E[y^2]\leq Y^2$.
	\item Alternatively, by corollary 3 in \citet{srebro2010smoothness}
	\footnote{Where $\bar{L^*}\leq Y^2$ and $H=2$ for the squared
		loss.}, using mirror descent with an online-to-batch conversion
	gives us an algorithm for which $\E[R(\hat{\bw})-R(\bw^*)]\leq
	\Ocal\left(\frac{BY}{\sqrt{m}}+\frac{B^2}{m}\right)$. In the regime
	where this bound is smaller than $Y^2$, it can be verified that
	$BY/\sqrt{m}$ is the dominant term, in which case we get an
	$\Ocal(BY/\sqrt{m})$ bound.
\end{itemize}
As to the second $\frac{B^2+dY^2}{m}$ term, it can matched (or even surpassed) by online learning algorithms based on the Vovk-Azoury-Warmouth forecaster, together with a standard online-to-batch conversion (\citet{vovk2001competitive,azoury2001relative,cesa2004generalization,vavskevivcius2020suboptimality}). However, these algorithms are \emph{improper}, in the sense that they do not return a linear predictor, and thus do not fall within the setting of \thmref{thm:main}. In the original version of this note, we conjectured that our lower bound can be matched via empirical risk minimization (i.e., given a training set $\{(\bx_i,y_i)\}_{i=1}^{m}$, return the linear predictor
$\hat{\bw}=\min_{\bw:\norm{\bw}\leq
	B}\frac{1}{m}\sum_{i=1}^{m}(\inner{\bw,\bx_i}-y_i)^2$). However, this was recently disproved in \citet{vavskevivcius2020suboptimality}.
%
%Based on existing results in the literature, this bound has essentially
%matching upper bounds, up to logarithmic factors:
%\begin{itemize}
%    \item Using the trivial zero predictor $\hat{\bw}=\mathbf{0}$, we are
%        guaranteed that $\E[R(\hat{\bw})-R(\bw^*)]\leq \E[R(\hat{\bw})] =
%        \E[(\inner{\mathbf{0},\bx}-y)^2] = \E[y^2]\leq Y^2$.
%    \item Using the Vovk-Azoury-Warmuth forecaster and a standard
%        online-to-batch conversion
%        (\citet{vovk2001competitive,azoury2001relative,cesa2004generalization}),
%        we have an algorithm for which $\E[R(\hat{\bw})-R(\bw^*)]\leq
%        \Ocal\left( \frac{B^2+dY^2\log(1+m/d)}{m}\right)$.
%    \item Alternatively, by corollary 3 in \citet{srebro2010smoothness}
%        \footnote{Where $\bar{L^*}\leq Y^2$ and $H=2$ for the squared
%        loss.}, using mirror descent with an online-to-batch conversion
%        gives us an algorithm for which $\E[R(\hat{\bw})-R(\bw^*)]\leq
%        \Ocal\left(\frac{BY}{\sqrt{m}}+\frac{B^2}{m}\right)$. In the regime
%        where this bound is smaller than $Y^2$, it can be verified that
%        $BY/\sqrt{m}$ is the dominant term, in which case we get an
%        $\Ocal(BY/\sqrt{m})$ bound.
%\end{itemize}
%Taking the best of these algorithmic approaches, we get the minimum of these
%upper bounds, i.e. we can find a predictor $\hat{\bw}$ for which
%\[
%\E[R(\hat{\bw})-R(\bw^*)]\leq   \Ocal\left(\min\left\{Y^2,\frac{B^2+dY^2\log\left(1+\frac{m}{d}\right)}{m},\frac{BY}{\sqrt{m}}\right\}\right).
%\]
%We conjecture that the same bound can be shown for empirical risk
%minimization .

Our lower bound has some interesting consequences: First, it implies that even
when $d=1$ (i.e. a one-dimensional problem), there is a non-trivial
dependence on the norm bound $B$. This is in contrast to results under the
well-specified model or other common distributional assumptions, which lead
to upper bounds independent of $B$. Second, it shows that in a
finite-dimensional setting, although the squared loss $(\inner{\bw,\bx}-y)^2$
may appear symmetric with respect to $y$ and $\inner{\bw,\bx}$, the
attainable excess risk may actually be much more sensitive to the bound $Y$ on
$|y|$ than to the bound $B$ on $|\inner{\bw,\bx}|$, due to the $d$ factor.
For example, if $Y$ is a constant, then $B$ can be as large as the dimension
$d$ without affecting the leading term of the excess risk. 
%Third, in the
%context of online learning, it implies that the Vovk-Azoury-Warmuth
%forecaster is essentially optimal in our setting and for a finite-dimensional
%regime, in terms of its dependence on both $d$ and $B$ (the lower bounds in
%\citet{vovk2001competitive,singer2002universal} do not show an explicit
%dependence on $B$).

\section{Proof of \thmref{thm:main}}

The proof of our main result consist of two separate lower bounds, each of
which uses a different construction. The theorem follows by combining them
and performing a few simplifications.

We begin by recalling the following result, which follows from the well-known
orthogonality principle:
\begin{lemma}\label{lem:proj}
  Let $R(\bw)=\E[(\inner{\bw,\bx}-y)^2]$, and $\bw^*=\arg\min_{\bw:\norm{\bw}\leq B}R(\bw)$. Then for any $\bw\in\reals^d$, it holds that
  \[
  R(\bw)-R(\bw^*)\geq \E[(\inner{\bw,\bx}-\inner{\bw^*,\bx})^2],
  \]
  with equality when $B=\infty$
\end{lemma}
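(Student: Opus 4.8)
The plan is to treat $R$ as a quadratic function of $\bw$ and exploit the first-order optimality of $\bw^*$. First I would expand the excess risk by hand: writing $a=\inner{\bw,\bx}$ and $a^*=\inner{\bw^*,\bx}$, the pointwise identity $(a-y)^2-(a^*-y)^2=(a-a^*)^2+2(a^*-y)(a-a^*)$ holds, so taking expectations gives
\[
R(\bw)-R(\bw^*)=\E\big[(\inner{\bw-\bw^*,\bx})^2\big]+2\,\E\big[(\inner{\bw^*,\bx}-y)\inner{\bw-\bw^*,\bx}\big].
\]
The first term on the right is exactly the quantity $\E[(\inner{\bw,\bx}-\inner{\bw^*,\bx})^2]$ appearing in the statement, so the lemma reduces to showing that the cross term (the second term, including its factor of $2$) is nonnegative.

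Next I would identify that cross term with a gradient. Since $R$ is quadratic, $\nabla R(\bw^*)=2\,\E[(\inner{\bw^*,\bx}-y)\bx]$, and hence the cross term equals $\inner{\nabla R(\bw^*),\bw-\bw^*}$. The key step is then the standard first-order optimality (variational) inequality: the Hessian $2\,\E[\bx\bx^\top]$ is positive semidefinite so $R$ is convex, and because $\bw^*$ minimizes $R$ over the convex feasible set $\Wcal$, we have $\inner{\nabla R(\bw^*),\bw-\bw^*}\geq 0$ for every $\bw\in\Wcal$. Substituting this into the decomposition yields the claimed inequality.

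For the equality claim when $B=\infty$, the norm constraint is vacuous and $\bw^*$ is an unconstrained global minimizer, so $\nabla R(\bw^*)=\mathbf{0}$. The cross term then vanishes identically — for every $\bw\in\reals^d$, not merely feasible ones — and the decomposition collapses to the stated equality $R(\bw)-R(\bw^*)=\E[(\inner{\bw,\bx}-\inner{\bw^*,\bx})^2]$.

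I expect the main obstacle to be the careful handling of the optimality condition rather than the algebra. One must confirm that a minimizer $\bw^*$ exists (which holds for finite $B$ since $R$ is continuous and $\Wcal$ is compact, and in the $B=\infty$ case amounts to the gradient vanishing) and then invoke the variational inequality in the correct direction. I would also note that the inequality genuinely relies on $\bw$ being feasible: for finite $B$ the cross term can turn negative when $\bw\notin\Wcal$ (as the footnote example already illustrates), so the inequality is used for $\bw\in\Wcal$, while only the $B=\infty$ equality extends to all of $\reals^d$.
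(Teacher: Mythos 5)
Your proof is correct, and it takes a genuinely different (more elementary) route than the paper. The paper works in the $L^2$ function space induced by the distribution of $\bx$: it introduces the regression function $\eta(\bx)=\E[y|\bx]$, rewrites $R(\bw)-R(\bw^*)$ as $\norm{f_{\bw}-\eta}^2-\norm{f_{\bw^*}-\eta}^2$, identifies $f_{\bw^*}$ as the metric projection of $\eta$ onto the closed convex set $\{f_{\bw}:\norm{\bw}\leq B\}$, and invokes the Pythagorean theorem (with equality when the set is the full linear subspace, i.e.\ $B=\infty$). You instead stay in parameter space: the pointwise expansion $(a-y)^2-(a^*-y)^2=(a-a^*)^2+2(a^*-y)(a-a^*)$ plus the identification of the cross term with $\inner{\nabla R(\bw^*),\bw-\bw^*}$, followed by the first-order variational inequality at the constrained minimizer. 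These are two faces of the same fact --- the variational inequality is exactly the obtuse-angle characterization of projection onto a convex set --- but your version avoids conditional expectations and Hilbert-space machinery entirely, and as a minor simplification, convexity of $R$ is not even needed: the variational inequality is a necessary condition at any minimizer over a convex set. Your closing caveat is also a genuine improvement in precision: as stated, the lemma's ``for any $\bw\in\reals^d$'' is an overstatement when $B<\infty$ (in the paper's own footnote example with $B=1/2$, taking $w=2$ gives excess risk $3/4$ but $\E[(wx-w^*x)^2]=9/4$), and the paper's projection argument has the same implicit restriction, since the Pythagorean inequality for projections onto a convex set requires $f_{\bw}$ to lie in that set. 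Both proofs therefore establish the inequality for $\norm{\bw}\leq B$ and the unconditional equality only when $B=\infty$, which is all that is used downstream.
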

\begin{proof}[Proof Sketch]
  For any $\bw\in \reals^d$, define the linear function $f_{\bw}:\reals^d\mapsto \reals$ by $f_{\bw}(\bx)=\inner{\bw,\bx}$. Then $\{f_\bw(\cdot):\norm{\bw}\leq B\}$ corresponds to a closed convex set in the $L^2$ function space defined via the inner product $\inner{f,g}=\E_{\bx}[f(\bx)g(\bx)]$ and norm $\norm{f}^2=\E_{\bx}[f^2(\bx)]$. Moreover, letting $\eta(\bx)=\E[y|\bx]$, we have
  \[
  R(\bw)-R(\bw^*) = \E[(\inner{\bw,\bx}-y)^2]-\E[(\inner{\bw,\bx}-y)^2] = \E[(f_{\bw}(\bx)-\eta(\bx))^2]-\E[(f_{\bw^*}(\bx)-\eta(\bx))^2] = \norm{f_{\bw}-\eta}^2-\norm{f_{\bw^*}-\eta}^2.
  \]
  In this representation, the inequality in the lemma reduces to
  \[
  \norm{f_{\bw}-f_{\bw^*}}^2+\norm{f_{\bw^*}-\eta}^2 \leq \norm{f_{\bw}-\eta}^2.
  \]
  When $B=\infty$, then $f_{\bw^*}$ is the projection of $\eta$ on the linear sub-space of linear functionals, hence the inequality above holds with equality by the pythagorean theorem. When $B<\infty$, then $f_{\bw^*}$ is the projection of $\eta$ on a constrained subset of this linear space, and we only have an inequality.
\end{proof}

Our first construction provides an excess risk lower bound even when we deal
with one-dimensional problems:
\begin{theorem}\label{thm:one}
  There exists a universal constant $c$, such that for any sample size $m$, target value bound $Y$, predictor norm bound $B\geq 2Y$, and any algorithm returning a linear predictor $\hat{\bw}$, there exists a data distribution in $d=1$ dimensions such that
  \[
  \E[R(\hat{w})-R(w^*)] \geq c\min\left\{Y^2,\frac{B^2}{m}\right\}.
  \]
\end{theorem}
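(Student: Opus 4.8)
The plan is to prove \thmref{thm:one} by a two-point (Le Cam) argument in which the only unknown is a sign $\sigma\in\{-1,+1\}$ that determines whether the optimal predictor sits at $+B$ or $-B$, and to cash in on the fact that when the constraint $|w|\le B$ is \emph{active} the excess risk grows linearly (not merely quadratically) in the distance from $w^*$ --- the phenomenon isolated in the footnote of Section~1. Concretely, I would fix a small scale $\epsilon\in(0,1]$ and a bias $\delta\in(0,1]$ to be chosen, and define a pair of one-dimensional laws $P_\sigma$ on $(x,y)$: put $x=\epsilon$ deterministically and let $y\in\{-Y,+Y\}$ with $\Pr_{P_\sigma}(y=Y)=\half(1+\sigma\delta)$. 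Then $\E[x^2]=\epsilon^2$ and $\E_{P_\sigma}[xy]=\sigma\epsilon Y\delta$, so the unconstrained minimizer is $\sigma Y\delta/\epsilon$; choosing $\epsilon<Y\delta/B$ forces its magnitude above $B$, so the constraint binds and $w^*_\sigma=\sigma B$.

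The first step is to convert a sign error into excess risk. Writing $R(w)=\epsilon^2w^2-2\sigma\epsilon Y\delta\,w+\E[y^2]$, a direct expansion around the active boundary gives, for $w$ on the same side of the origin as $w^*_\sigma$, the identity $R(w)-R(w^*_\sigma)=\tau\,|w^*_\sigma-w|+\epsilon^2|w^*_\sigma-w|^2$, where $\tau=2\epsilon(Y\delta-B\epsilon)>0$ is minus the one-sided derivative of $R$ at $w^*_\sigma$. (Note that \lemref{lem:proj} already supplies the quadratic part $\epsilon^2|w^*_\sigma-w|^2$; the linear term $\tau|w^*_\sigma-w|$ is the extra boost from the active constraint.) In particular, any $\hat w$ whose sign disagrees with $\sigma$ has $|w^*_\sigma-\hat w|\ge B$ and hence excess risk at least $\tau B=2B\epsilon(Y\delta-B\epsilon)$; maximizing this over admissible $\epsilon$ at $\epsilon=Y\delta/(2B)$ (which satisfies $\epsilon\le 1$ since $B\ge 2Y$) yields a wrong-sign penalty of order $Y^2\delta^2$.

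The second step is the indistinguishability bound. Since $P_+$ and $P_-$ differ only in a Bernoulli bias $\pm\delta$, the per-sample KL divergence is $O(\delta^2)$, so $\mathrm{KL}(P_+^m\,\|\,P_-^m)=O(m\delta^2)$ and Pinsker gives total variation $O(\delta\sqrt m)$ between the $m$-fold products. The standard reduction --- the minimax excess risk is at least the wrong-sign penalty times $\half\bigl(1-\mathrm{TV}(P_+^m,P_-^m)\bigr)$ --- then yields $\E[R(\hat w)-R(w^*)]\gtrsim Y^2\delta^2(1-c\,\delta\sqrt m)$. Taking $\delta=\Theta(1/\sqrt m)$ in the large-sample regime and $\delta=\Theta(1)$ when $m$ is small produces the two branches of a $\min\{Y^2,\,\cdot/m\}$ bound.

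The main obstacle is getting the norm bound $B$, rather than merely $Y$, into the decaying branch: the balancing above only delivers $Y^2/m$, because the boundary slope $\tau$ is damped by exactly the small scale $\epsilon\sim Y/B$ that the active constraint forces, and the two $\epsilon$ factors cancel the gain from the $B$-sized sign error. This is the crux, and is where the construction must be chosen most carefully. The route I would pursue is to \emph{decouple} the scale at which the signal is hidden from the scale at which the constraint binds --- placing the informative mass on a rare small-$x$ event while keeping the ``failure'' cost of outputting the zero predictor of order $Y^2$ --- so that recovering the sign genuinely requires $\Omega(B^2/Y^2)$ samples while a sign error still costs a constant fraction of $Y^2$. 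I would then verify that this refinement yields the $Y^2$ plateau for $m\lesssim B^2/Y^2$ and the $B^2/m$ decay beyond it, and finally read off $c\,\min\{Y^2,B^2/m\}$.
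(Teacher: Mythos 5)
Your completed argument (deterministic $x=\epsilon$, a $\pm\delta$ Bernoulli bias on $y$, the active-constraint identity, Pinsker) is sound as far as it goes, but it proves only $\E[R(\hat w)-R(w^*)]=\Omega\bigl(Y^2\delta^2(1-c\delta\sqrt m)\bigr)$, which after optimizing $\delta$ yields $\Omega(Y^2/m)$ --- short of the claimed $\min\{Y^2,B^2/m\}$ by a factor as large as $B^2/Y^2$, and giving the $Y^2$ plateau only for $m=O(1)$ rather than for all $m\lesssim B^2/Y^2$. You correctly flag this as the crux, but the barrier is structural and your sketched escape does not break it: whenever the two laws differ by a bias $\delta$ on an event observed with constant probability per sample, the KL divergence is $\Theta(m\delta^2)$, forcing $\delta\lesssim 1/\sqrt m$, while the payoff for a sign error is quadratic in $\delta$ because the excess-risk weight $\E[x^2]$ shrinks with exactly the scale $\epsilon\sim Y\delta/B$ that the binding constraint requires --- the cancellation you yourself identify. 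Note also that after optimizing $\epsilon$, your linear boost $\tau B\sim Y^2\delta^2$ is of the same order as the plain quadratic term $\epsilon^2B^2$ supplied by \lemref{lem:proj}, so the active-constraint phenomenon buys only constants; it is not the mechanism behind the theorem.

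The paper's construction escapes via two ideas absent from your sketch. It keeps $y=Y$ deterministic and randomizes $x$: under $\Dcal_0$, $x=Y/B$ with probability $\alpha$ and $x=0$ otherwise; $\Dcal_1$ additionally moves a rare mass $\gamma\sim 1/m$ to the \emph{large} value $x=1$. First, because the perturbation is a rare event rather than a bias, the $\chi^2$ bound gives KL \emph{linear} in $\gamma$, namely $D_{kl}\lesssim\gamma(1+\gamma/(\alpha-\gamma))$, so a statistically invisible change can be a macroscopic alteration of the tail. Second, the minimizer $w^*=\E[xy]/\E[x^2]$ is a ratio, so by deliberately shrinking the bulk second moment via $\alpha=\min\{1,B^2/(Y^2m)\}$, the rare event swings $w^*$ by $\Theta(B)$ (from $w_0^*=B$ to $w_1^*$), while the excess-risk weight $\E[x^2]\geq\alpha Y^2/B^2$ is damped only by $1/m$; the product is $\Theta(B^2/m)$, and taking $\alpha=1$ gives the $\Theta(Y^2)$ plateau for $m\leq B^2/Y^2$. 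Observe that this construction never needs the constraint to bind --- $w_0^*$ and $w_1^*$ are unconstrained optima and \lemref{lem:proj} holds at equality --- and the informative rare mass sits at \emph{large} $x$, the opposite of your ``rare small-$x$ event'': a rare perturbation at small $x$ barely moves the ratio $\E[xy]/\E[x^2]$. So the gap is genuine: the final step of your plan (``verify that this refinement yields \dots the $B^2/m$ decay'') conceals the entire difficulty, and the decoupling actually needed is this two-scale construction in $x$ with a tuned bulk probability $\alpha$, not a sign-symmetric bias on $y$.
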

\begin{proof}
  Let $\alpha,\gamma$ be small positive parameters in $(0,1]$ to be chosen later, such that $\alpha>\gamma$, and consider the following two distributions over $(x,y)$:
  \begin{itemize}
    \item Distribution $\Dcal_0$: $y=Y$ w.p. 1;
        $x=\begin{cases}Y/B&~\text{w.p.}~\alpha\\0&~\text{w.p.}~1-\alpha\end{cases}$.
    \item Distribution $\Dcal_1$: $y=Y$ w.p. 1;
        $x=\begin{cases}1&~\text{w.p.}~\gamma\\
        Y/B&~\text{w.p.}~\alpha-\gamma\\0&~\text{w.p.}~1-\alpha\end{cases}$.
  \end{itemize}
  Note that since $B\geq 2Y$, $|x|\leq 1$, so these are indeed valid distributions. Intuitively, in both distributions $x$ is small most of the time, but under $\Dcal_1$ it can occasionally have a ``large'' value of $1$. Unless the sample size is large enough, it is not possible to distinguish between these two distributions, and this will lead to an excess risk lower bound.

  Let $\E_0$ and $\E_1$ denote expectations with respect to $\Dcal_0$ and $\Dcal_1$ respectively. Let
  \[
  w^*_0 = B
  \]
  denote the optimal predictor under $\Dcal_0$, and let
  \[
  w^*_1 = \frac{\E_{1}[yx]}{\E_{1}[x^2]} = \frac{(Y^2/B)(\alpha-\gamma)+Y\gamma}{(Y^2/B^2)(\alpha-\gamma)+\gamma}
  = B\frac{Y^2(\alpha-\gamma)+BY\gamma}{Y^2(\alpha-\gamma)+B^2\gamma}
  \]
  denote the optimal predictor under $\Dcal_1$. Note that $w^*_1\geq w^*_0$, and moreover,
  \begin{equation}\label{eq:diff}
  (w^*_1-w^*_0)^2 = B^2\left(\frac{Y^2(\alpha-\gamma)+BY\gamma}{Y^2(\alpha-\gamma)+B^2\gamma}-1\right)^2
  = B^4\gamma^2\left(\frac{Y-B}{Y^2\alpha+(B^2-Y^2)\gamma}\right)^2
  \geq B^4\gamma^2\left(\frac{Y-B}{Y^2\alpha+B^2\gamma}\right)^2
  \end{equation}

  By Yao's minimax principle, it is sufficient to show that when choosing either $\Dcal_0$ or $\Dcal_1$ uniformly at random, and generating a dataset according to that distribution, any deterministic algorithm attains the lower bound in the theorem. Using \lemref{lem:proj}, and the notation $\Pr_0$ (respectively $\Pr_1$) to denote probabilities with respect to $\Dcal_0$ (respectively $\Dcal_1$), we have
  \begin{align*}
    \E\left[R(\hat{w})-R(w^*)\right] &= \frac{1}{2}\left(\E_0[(\hat{w}x-w^*_0x)^2]+\E_1[(\hat{w}x-w^*_1x)^2]\right)\\
    &\geq \frac{1}{2}\frac{Y^2 \alpha}{B^2}\left(\E_0[(\hat{w}-w^*_0)^2]+\E_1[(\hat{w}-w^*_1)^2]\right)\\
    &\geq \frac{1}{2}\frac{Y^2 \alpha}{B^2}\left(\frac{w^*_1-w^*_0}{2}\right)^2\left({\Pr}_0\left(\hat{w}< \frac{w^*_0+w^*_1}{2}\right)+{\Pr}_1\left(\hat{w}\geq \frac{w^*_0+w^*_1}{2}\right)\right)\\
    &= \frac{1}{2}\frac{Y^2\alpha}{B^2}\left(\frac{w^*_1-w^*_0}{2}\right)^2\left(1-\left({\Pr}_0\left(\hat{w}\geq \frac{w^*_0+w^*_1}{2}\right)-{\Pr}_1\left(\hat{w}\geq \frac{w^*_0+w^*_1}{2}\right)\right)\right)\\
    &\geq \frac{1}{2}\frac{Y^2 \alpha}{B^2}\left(\frac{w^*_1-w^*_0}{2}\right)^2\left(1-\left|{\Pr}_0\left(\hat{w}\geq \frac{w^*_0+w^*_1}{2}\right)-{\Pr}_1\left(\hat{w}\geq \frac{w^*_0+w^*_1}{2}\right)\right|\right).
  \end{align*}
  By Pinsker's inequality, since $\hat{w}$ is a deterministic function of the training set $S$, this is at least
  \[
  \frac{1}{8}\frac{Y^2\alpha}{B^2}\left(w^*_1-w^*_0\right)^2\left(1-\sqrt{\frac{1}{2}D_{kl}({\Pr}_0(S)||{\Pr}_1(S))}\right),
  \]
  where $D_{kl}$ is the Kullback-Leibler divergence. Since $S$ is composed of $m$ i.i.d. instances, and the target value $y$ is fixed under both distributions, we can invoke the chain rule and rewrite this as
  \[
  \frac{1}{8}\frac{Y^2\alpha}{B^2}\left(w^*_1-w^*_0\right)^2\left(1-\sqrt{\frac{m}{2}~D_{kl}({\Pr}_0(x)||{\Pr}_1(x))}\right),
  \]
  To simplify the bound, note that the Kullback-Leibler divergence between two distributions $p,q$ can be upper bounded by their $\chi^2$ divergence, which equals $\sum_a \frac{(p(a)-q(a))^2}{q(a)}$. Therefore,
  \[
  D_{kl}({\Pr}_0(x)||{\Pr}_1(x)) \leq \frac{\gamma^2}{\gamma}+\frac{\gamma^2}{\alpha-\gamma} = \gamma\left(1+\frac{\gamma}{\alpha-\gamma}\right).
  \]
  Plugging this back, as well as the value of $(w^*_1-w^*_0)^2$ from \eqref{eq:diff}, we get an excess loss lower bound on the form
  \[
  \frac{1}{8}Y^2\alpha B^2\gamma^2\left(\frac{Y-B}{Y^2\alpha+B^2 \gamma}\right)^2
  \left(1-\sqrt{\frac{m}{2}\gamma\left(1+\frac{\gamma}{\alpha-\gamma}\right)}\right),
  \]
  We now consider two cases:
  \begin{itemize}
    \item If $m\leq B^2/Y^2$, we pick $\alpha=1$ and $\gamma=1/3m$, and get
        that the expression above is at least
    \begin{align*}
    &\frac{Y^2}{72}\frac{B^2}{m^2}\left(\frac{B-Y}{Y^2+B^2/3m}\right)^2
    \left(1-\sqrt{\frac{1}{6}\left(1+\frac{1/3m}{1-1/3m}\right)}\right)\\
    &=
    \frac{Y^2}{72}\left(\frac{B(B-Y)}{mY^2+B^2/3}\right)^2
    \left(1-\sqrt{\frac{1}{6}\left(1+\frac{1}{3m-1}\right)}\right)\\
    &\geq
    \frac{Y^2}{72}\left(\frac{B(B-Y)}{(B^2/Y^2)Y^2+B^2/3}\right)^2
    \left(1-\sqrt{\frac{1}{6}\left(1+\frac{1}{3m-1}\right)}\right)\\
    &\geq
    \frac{Y^2}{72}\left(\frac{B(B-Y)}{(1+1/3)B^2}\right)^2
    \left(1-\sqrt{\frac{1}{6}\left(1+\frac{1}{2}\right)}\right)\\
    &\geq 0.003~Y^2 \left(\frac{B-Y}{B}\right)^2
    ~=~ 0.003~Y^2\left(1-\frac{Y}{B}\right)^2 ~\geq~ 0.003~Y^2\left(1-\frac{1}{2}\right)^2,
    \end{align*}
    where we used the assumption that $B\geq 2Y$.
  \item If $m> B^2/Y^2$, we pick $\alpha=B^2/(Y^2 m)$ and $\gamma=1/3m$ and
      get that the expression above is at least
  \begin{align*}
  &\frac{1}{8}\frac{B^4}{m}\frac{1}{9m^2}\left(\frac{B-Y}{B^2/m+B^2/3m}\right)^2
  \left(1-\sqrt{\frac{1}{6}\left(1+\frac{1/3m}{(B^2/Y^2-1/3)/m}\right)}\right)\\
  &\geq \frac{1}{72}\frac{(B-Y)^2}{m(1+1/3)^2}
  \left(1-\sqrt{\frac{1}{6}\left(1+\frac{1/3}{4-1/3}\right)}\right)\\
  &\geq 0.004 \frac{(B-Y)^2}{m} ~\geq~ 0.004 \frac{(B-B/2)^2}{m} ~=~ 0.001\frac{B^2}{m},
  \end{align*}
  where we used the assumption that $B\geq 2Y$.
  \end{itemize}
  Combining the two cases, we get an excess risk lower bound of $c~\min\left\{Y^2,\frac{B^2}{m}\right\}$ for some universal constant $c$.
\end{proof}

Our second construction provides a different type of bound, which quantifies
a dependence on the dimension $d$. The construction is similar to standard
dimension-dependent lower bounds for learning with the squared loss, but we
are careful to analyze the dependence on all relevant parameters.
\begin{theorem}\label{thm:slow}
  There exists a universal constant $c$, such that for any dimension $d$, sample size $m$, target value bound $Y$, predictor norm bound $B$ and any algorithm returning a linear predictor $\hat{\bw}$, there exists a data distribution in $d$ dimensions such that
  \[
  \E[R(\hat{\bw})-R(\bw^*)]\geq   c~\min\left\{Y^2,B^2,\frac{dY^2}{m},\frac{BY}{\sqrt{m}}\right\}.
  \]
\end{theorem}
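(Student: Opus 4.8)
The plan is to use the standard dimension-dependent construction, tracking all of $B,Y,d,m$. Let $k\le d$ be a parameter (chosen later per regime) and let $\bx$ be uniform over the first $k$ standard basis vectors $\be_1,\dots,\be_k$; conditioned on $\bx=\be_j$, let $y=Y$ with probability $(1+\sigma_j\delta)/2$ and $y=-Y$ otherwise, where $\bsigma\in\{\pm1\}^k$ is a hidden sign vector and $\delta\in(0,1]$ is a bias. Since $\norm{\bx}=1$ and $|y|=Y$ this is a valid distribution. As in the proof of \thmref{thm:one} I would place a uniform prior on $\bsigma$ and apply Yao's principle, so it suffices to lower bound the Bayes risk of a fixed algorithm. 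Because the coordinates are orthogonal the problem decouples: the conditional mean is $\eta(\be_j)=Y\sigma_j\delta$, the unconstrained optimum is $(Y\delta\sigma_1,\dots,Y\delta\sigma_k)$ with norm $Y\delta\sqrt{k}$, and hence (projecting onto the ball of radius $B$) the constrained optimum is $w^*_j=\min\{Y\delta,\,B/\sqrt{k}\}\,\sigma_j$.

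The crucial step is to evaluate the excess risk \emph{directly}, not through \lemref{lem:proj}. Writing $a=Y\delta$ and $b=B/\sqrt{k}$, a short computation gives, in the non-binding regime $a\le b$, an excess risk of exactly $\frac1k\sum_j(\hat w_j-a\sigma_j)^2$, and in the binding regime $a>b$, an excess risk of $\frac1k\norm{\hat\bw}^2-\frac{2a}{k}\inner{\bsigma,\hat\bw}+2ab-b^2$. The latter contains a term \emph{first order} in the alignment $\inner{\bsigma,\hat\bw}$ --- precisely the excess-over-prediction-difference phenomenon flagged in the introduction, and exactly what \lemref{lem:proj} discards --- and it is responsible for the slow $1/\sqrt m$ rate.

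In both regimes I would lower bound the Bayes risk by the same information-theoretic argument as in \thmref{thm:one}, applied coordinatewise (i.e. Assouad's lemma): each of the $k$ coordinates is seen only about $m/k$ times with $O(\delta^2)$ information per sample, so by the Pinsker/$\chi^2$ bound the sign $\sigma_j$ cannot be recovered once $(m/k)\delta^2\lesssim1$. In the non-binding case this directly forces $\frac1k\sum_j\E(\hat w_j-a\sigma_j)^2\gtrsim a^2$. In the binding case I would control the first-order term through the norm constraint, via $\E\inner{\bsigma,\hat\bw}=\E\inner{\bg,\hat\bw}\le B\sqrt{\sum_j\E[g_j^2]}$ with posterior means $g_j=\E[\sigma_j\mid S]$ and the coordinatewise estimate $\sum_j\E[g_j^2]\lesssim m\delta^2$, yielding an excess risk of order $ab-b^2$.

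It then remains to optimize $k$ and $\delta$. Taking $\delta\asymp\sqrt{k/m}$ (the largest bias that is still statistically undetectable): with $k=d$ in the non-binding regime this gives $a^2\asymp dY^2/m$ (the fast rate), and capping $\delta=1$ when $m\lesssim d$ gives the $Y^2$ term; in the binding regime it gives $ab-b^2\asymp YB/\sqrt m-B^2/k$, and choosing $k\asymp B\sqrt m/Y$ --- which is $\le d$ exactly in the slow regime $m\lesssim d^2Y^2/B^2$, makes the construction binding, and renders $B^2/k$ lower order --- yields the slow rate $YB/\sqrt m$, while the degenerate choice $k=1$ (relevant for small $B$) gives the $B^2$ term. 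Taking the best construction over regimes yields $c\min\{Y^2,B^2,dY^2/m,BY/\sqrt m\}$. I expect the slow-rate term to be the main obstacle: it is invisible to \lemref{lem:proj}, so one must exploit the first-order term available only when the constraint is active, and then balance the number of active coordinates $k$ against the bias $\delta$ so that the construction is simultaneously binding and indistinguishable across both the $m\lesssim d$ and $m\gtrsim d$ parts of the slow regime --- which is what forces $k$ to vary with the parameters rather than always equal $d$.
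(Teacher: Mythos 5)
Your construction and information-theoretic machinery (uniform distribution over $k\le d$ basis vectors, $\pm Y$ labels with hidden sign bias, uniform prior plus Yao, coordinatewise Pinsker/$\chi^2$) coincide exactly with the paper's; the genuine divergence is in the binding regime, and it rests on a misconception. You claim the slow rate $BY/\sqrt{m}$ is ``invisible to \lemref{lem:proj}'' and therefore requires the first-order alignment term $-\frac{2a}{k}\inner{\bsigma,\hat{\bw}}$. That is not so: the paper obtains all four terms from \lemref{lem:proj} alone. With $d'\asymp B\sqrt{m}/Y$ hidden signs and constrained optimum $w^*_j=\sigma_j\min\{Yb,B/\sqrt{d'}\}$, the prediction difference itself satisfies
\[
\E\left[(\inner{\hat{\bw},\bx}-\inner{\bw^*,\bx})^2\right]=\frac{1}{d'}\sum_{j=1}^{d'}\E\left[(\hat{w}_j-w^*_j)^2\right]\gtrsim \frac{B^2}{d'}\asymp \frac{BY}{\sqrt{m}}
\]
once the signs are statistically unrecoverable --- the slow rate emerges purely from tuning $d'$ so that the per-coordinate magnitude is $B/\sqrt{d'}$, not from the gap between excess risk and prediction difference. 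The footnote phenomenon you cite shows \lemref{lem:proj} can be loose; it does not show it is too weak for this construction.

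Your alternative route through the direct excess-risk formula $\frac{1}{k}\norm{\hat{\bw}}^2-\frac{2a}{k}\inner{\bsigma,\hat{\bw}}+2ab-b^2$ (which you compute correctly) is workable, but it is more delicate than you acknowledge, in three places. First, with $k\asymp B\sqrt{m}/Y$ and $\delta\asymp\sqrt{k/m}$ the term $b^2=B^2/k$ is the \emph{same} order as $ab\asymp BY/\sqrt{m}$, not lower order as you assert; you need $k\geq cB\sqrt{m}/Y$ with $c>1$ strictly, so that $b\leq a/2$, say. Second, the alignment term $\frac{2a}{k}\E\inner{\bg,\hat{\bw}}\leq \frac{2a}{k}B\sqrt{\sum_j\E[g_j^2]}$ is also of the same order as $2ab$ when $\delta=\Theta(\sqrt{k/m})$, so the leading terms nearly cancel: you must take $\delta=\epsilon\sqrt{k/m}$ with $\epsilon$ small and track constants explicitly. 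Third, the estimate $\E[g_j^2]\lesssim (m/k)\delta^2$ does not follow from the Pinsker/$\chi^2$ bound you invoke (that gives only $\E|g_j|\leq \mathrm{TV}\lesssim\sqrt{(m/k)\delta^2}$); one needs a tensorizing divergence such as squared Hellinger or the Le Cam (triangular) divergence, though the weaker bound $\E[g_j^2]\leq\E|g_j|$ also suffices after re-tuning $\epsilon$. Also cap $\delta$ at $1/2$ rather than $1$, as the paper does with $b=\min\{1/2,\sqrt{d'/6m}\}$: at $\delta=1$ the labels are deterministic, the per-sample KL is infinite, and the indistinguishability argument breaks (the paper's cap is also what produces the $Y^2$ term cleanly). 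In sum, your proof can be completed, but the paper's uniform use of \lemref{lem:proj} in both the binding and non-binding regimes is simpler and sidesteps all of these issues.
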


\begin{proof}
  By Yao's minimax principle, it is sufficient to display a randomized choice of data distributions, with respect to which the expected excess error of any deterministic algorithm attains the lower bound in the theorem.

  In particular, fix some $d'\leq d$ to be chosen later, let $\bsigma\in \{-1,+1\}^{d'}$ be chosen uniformly at random, and consider the distribution $\Dcal_{\bsigma}$ (indexed by $\bsigma$) over examples $(\bx,y)$, defined as follows: $\bx$ is chosen uniformly at random among the first $d'$ standard basis vectors, and $y=Y$ with probability $\frac{1}{2}\left(1+\sigma_i b\right)$, where $b=\min\{1/2,\sqrt{d'/6m}\}$, and $y=-Y$ otherwise.

  A simple calculation shows that the optimum $\bw^*=\arg\min_{\bw:\norm{\bw}\leq B}R(\bw)$ is such that
  \[
  \forall~i~,~~~w^*_i =  \sigma_i\min\{Yb,B/\sqrt{d}\}.
  \]
  Therefore, using \lemref{lem:proj} and the notation $\mathbf{1}_A$ to be the indicator function for the event $A$:
  \begin{align*}
    \E\left[R(\hat{\bw})-R(\bw^*)\right] &= \E[(\inner{\hat{\bw},\bx}-\inner{\bw^*,\bx})^2]\\
    &= \E\left[\frac{1}{d'}\sum_{i=1}^{d'}(\hat{\bw}_i-\bw^*_i)^2\right]\\
    &= \frac{1}{d'}\sum_{i=1}^{d'}\E[(\hat{\bw}_i-\bw^*_i)^2]\\
    &\geq \frac{1}{d'}\sum_{i=1}^{d'}\E[(\bw^*_i)^2 \mathbf{1}_{\hat{\bw}_i\bw^*_i\leq 0}]\\
    &= \frac{1}{d'}\left(\min\{Yb,B/\sqrt{d'}\}\right)^2\sum_{i=1}^{d'}\Pr(\hat{\bw}_i\bw^*_i\leq 0).
  \end{align*}
  Since $\sigma_i$ is uniformly distributed on $\{-1,+1\}$, and has the same sign as $w^*_i$, this equals
  \begin{align*}
    &\frac{1}{d'}\left(\min\{Yb,B/\sqrt{d'}\}\right)^2\sum_{i=1}^{d'}\frac{1}{2}
    \left(\Pr(\hat{\bw}_i\geq 0|\sigma_i<0)+\Pr(\hat{\bw}_i\leq 0|\sigma_i>0)\right)\\
    &\geq \frac{1}{2d'}\left(\min\{Yb,B/\sqrt{d'}\}\right)^2\sum_{i=1}^{d'}
    \left(1-\Pr(\hat{\bw}_i\leq 0|\sigma_i<0)+\Pr(\hat{\bw}_i\leq 0|\sigma_i>0)\right)\\
    &\geq \frac{1}{2d'}\left(\min\{Yb,B/\sqrt{d'}\}\right)^2\sum_{i=1}^{d'}
    \left(1-\left|\Pr(\hat{\bw}_i\leq 0|\sigma_i<0)-\Pr(\hat{\bw}_i\leq 0|\sigma_i>0)\right|\right)\\
  \end{align*}
  Using Pinsker's inequality and the fact that $\hat{\bw}$ is a deterministic function of the training set $S$, this is at least
  \begin{equation}\label{eq:pinsker}
  \frac{1}{2d'}\left(\min\{Yb,B/\sqrt{d'}\}\right)^2\sum_{i=1}^{d'}
    \left(1-\sqrt{\frac{1}{2}D_{kl}\left(\Pr(S|\sigma_i<0)||\Pr(S|\sigma_i>0)\right)}\right),
  \end{equation}
  where $D_{kl}$ is the Kullback-Leibler (KL) divergence. Since the training set is composed of $m$ i.i.d. instances,  we can use the chain rule and get that this divergence equals $mD_{kl}\left(\Pr((\bx,y)|\sigma_i<0)||\Pr((\bx,y)|\sigma_i>0)\right)$. Moreover, we note that
  \begin{align*}
  \Pr((\bx,y)|\sigma_i)&=~\Pr(\bx=\be_i)\Pr((\bx,y)|\sigma_i,\bx_i=\be_i)+\Pr(\bx\neq \be_i)\Pr((\bx,y)|\sigma_i,\bx\neq \be_i)\\
  &=~\frac{1}{d'}\Pr((\bx,y)|\sigma_i,\bx=\be_i)+\left(1-\frac{1}{d'}\right)\Pr((\bx,y)|\sigma_i,\bx\neq\be_i),
  \end{align*}
  and therefore, by joint convexity of the KL-divergence, we get
  \begin{align*}
  D_{kl}(\Pr((\bx,y)|\sigma_i>0)||\Pr((\bx,y)|\sigma_i<0)) &= \frac{1}{d'}D_{kl}\left(\Pr((\bx,y)|\sigma_i<0,\bx=\be_i)||\Pr((\bx,y)|\sigma_i>0,\bx=\be_i)\right)\\
  &+\left(1-\frac{1}{d'}\right)D_{kl}\left(\Pr((\bx,y)|\sigma_i<0,\bx\neq\be_i)||\Pr((\bx,y)|\sigma_i>0,\bx\neq\be_i)\right).
  \end{align*}
  Since the distribution of $y$ is independent of $\sigma_i$, conditioned on $\bx\neq \be_i$, this equals
  \begin{equation}\label{eq:dkl2}
  \frac{1}{d'}D_{kl}\left(\Pr(y|\sigma_i>0,\bx=\be_i)||\Pr(y|\sigma_i<0,\bx=\be_i)\right).
  \end{equation}
  The divergence in this equation is simply the KL divergence between two Bernoulli random variables, one with parameter $\frac{1}{2}\left(1+b\right)$, and the other with parameter $\frac{1}{2}\left(1-b\right)$. To get a simple upper bound, note that the KL divergence between two distributions $p,q$ can be upper bounded by their $\chi^2$ divergence, which equals $\sum_a \frac{(p(a)-q(a))^2}{q(a)}$. Therefore, we can upper bound \eqref{eq:dkl2} by
  \[
  \frac{b^2}{d'}\left(\frac{1}{\frac{1}{2}(1+b)}+\frac{1}{\frac{1}{2}(1-b)}\right)=
  \frac{2b^2}{d'}\left(\frac{1}{1+b}+\frac{1}{1-b}\right)
  \leq   \frac{2b^2}{d'}\left(1+\frac{1}{1/2}\right) = \frac{6b^2}{d'},
  \]
  where we used the fact that $b\in [0,1/2]$. Summarizing the discussion so far, we showed that
  \[
  D_{kl}\left(\Pr(S|\sigma_i<0)||\Pr(S|\sigma_i>0)\right)~=~ m~D_{kl}\left(\Pr((\bx,y)|\sigma_i<0)||\Pr((\bx,y)|\sigma_i>0)\right)
  ~=~ \frac{6mb^2}{d'}.
  \]
  Plugging this back into \eqref{eq:pinsker}, we get that the excess risk is lower bounded by
  \begin{align*}
  \frac{1}{2d'}\left(\min\{Yb,B/\sqrt{d'}\}\right)^2\sum_{i=1}^{d'}
    \left(1-\sqrt{\frac{3mb^2}{d'}}\right)
    &=
    \left(\min\{Yb,B/\sqrt{d'}\}\right)^2\frac{1}{2}
    \left(1-\sqrt{\frac{3mb^2}{d'}}\right)
    \\
    &\geq
    \left(\min\{Yb,B/\sqrt{d'}\}\right)^2\frac{1}{2}
    \left(1-\sqrt{\frac{3m(d'/6m)}{d'}}\right)\\
    &\geq
    0.14 \left(\min\{Yb,B/\sqrt{d'}\}\right)^2\\
    &=
    0.14 \left(\min\left\{Y\min\left\{\frac{1}{2},\sqrt{\frac{d'}{6m}}\right\},\frac{B}{\sqrt{d'}}\right\}\right)^2\\
    &=
    0.14 \min\left\{\frac{1}{4}Y^2,\frac{d'Y^2}{6m},\frac{B^2}{d'}\right\}.
  \end{align*}
  Now, recall that $d'$ is a free parameter of value at most $d$. We now distinguish between two cases:
  \begin{itemize}
    \item If $d > \sqrt{6m}B/Y $, then we pick $d'= \lceil \sqrt{6m}B/Y
        \rceil$, and get that the expression above is at least
        \[
        0.14 \min\left\{\frac{1}{4}Y^2,\frac{B^2}{d'}\right\}
        ~\geq~
        0.14 \min\left\{\frac{1}{4}Y^2,\frac{B^2}{\max\left\{1,2\sqrt{6m}\frac{B}{Y}\right\}}\right\}
        ~=~
        0.14 \min\left\{\frac{1}{4}Y^2,B^2,\frac{BY}{2\sqrt{6m}}\right\}.
        \]
    \item If $d \leq \sqrt{6m}B/Y $, we pick $d'=d$, and note that
        $\frac{d'Y^2}{6m}\leq \frac{B^2}{d}$ in this case. Therefore, the
        expression above is at least
    \[
    0.14 \min\left\{\frac{1}{4}Y^2,\frac{dY^2}{6m}\right\}
    \]
  \end{itemize}
  Combining the two cases, we get that a lower bound of the form
  \[
  c~\min\left\{Y^2,B^2,\frac{dY^2}{m},\frac{BY}{\sqrt{m}}\right\},
  \]
  where $c$ is a universal constant.
\end{proof}

With \thmref{thm:one} and \thmref{thm:slow} at hand, we now turn to prove our
main result:
\begin{proof}[Proof of \thmref{thm:main}]
  Taking the maximum of \thmref{thm:one} and \thmref{thm:slow}, and using the fact that $B\geq 2Y$, we get a lower bound of
  \[
  c\max\left\{\min\left\{Y^2,\frac{B^2}{m}\right\}~,~\min\left\{Y^2,\frac{dY^2}{m},\frac{BY}{\sqrt{m}}\right\}\right\}
  \]
  for some constant $c$. If $m\leq (B^2/Y^2)$, this is at least $Y^2$, and otherwise it is
  \[
  c\max\left\{\frac{B^2}{m}~,~\min\left\{\frac{dY^2}{m},\frac{BY}{\sqrt{m}}\right\}\right\}
  ~\geq~ \frac{c}{2}\left(\frac{B^2}{m}+\min\left\{\frac{dY^2}{m},\frac{BY}{\sqrt{m}}\right\}\right)
  ~\geq~ \frac{c}{2}\min\left\{\frac{B^2+dY^2}{m},\frac{BY}{\sqrt{m}}\right\}.
  \]
  Combining the two cases, the result follows.
\end{proof}

\paragraph{Acknowledgements:}{We thank Nati Srebro, Tomas  Va{\v{s}}kevi{\v{c}}ius and Nikita Zhivotovskiy for very helpful comments.}

\bibliographystyle{plainnat}
\bibliography{mybib}

\end{document}